\documentclass{article}

\usepackage{iclr2026_conference,times}

\usepackage{amsmath,amssymb,amsthm,mathtools}
\usepackage{centernot}
\usepackage{enumitem}
\usepackage{graphicx}
\usepackage{hyperref}
\usepackage{microtype}
\usepackage{tikz}
\usepackage{times}
\usepackage{url}
\usepackage[dvipsnames,table]{xcolor}

\usetikzlibrary{arrows.meta,positioning,calc}

\hypersetup{
  colorlinks=true,
  linkcolor=RubineRed,
  citecolor=Emerald,
  urlcolor=RubineRed
}

\newtheorem{definition}{Definition}
\newtheorem{theorem}{Theorem}
\newtheorem{proposition}{Proposition}
\newtheorem{corollary}{Corollary}
\newtheorem{remark}{Remark}

\DeclareMathOperator*{\infimum}{inf}

\newcommand{\X}{\mathcal{X}}

\newcommand{\Gam}{\Gamma}

\newcommand{\Act}{\mathsf{Act}}

\title{
\fontsize{17.5}{22.0}\selectfont
Intrinsic-Energy Joint Embedding Predictive\\
Architectures Induce Quasimetric Spaces
}

\iclrfinalcopy

\author{
Anthony Kobanda$^{1,2}$,
Waris Radji$^2$\\
$^1$Ubisoft La Forge, Bordeaux, France,\\
$^2$Inria Scool, 
Université de Lille, France.
}

\begin{document}

    \maketitle

    % FRONT MATTER

        \vspace{-0.75em}

        % ABSTRACT
        \begin{abstract}
Joint-Embedding Predictive Architectures (JEPAs) aim to learn representations by predicting target embeddings from context embeddings, 
inducing a scalar\linebreak
compatibility energy in a latent space.
In contrast, Quasimetric Reinforcement Learning (QRL) studies goal-conditioned control through \emph{directed} distance values\linebreak
(cost-to-go) that support reaching goals under asymmetric dynamics. 
In this short article,
we connect these viewpoints by restricting attention to a principled class of JEPA energy functions : \emph{intrinsic (least-action) energies}, defined as infima of\linebreak
accumulated local effort over admissible trajectories between two states.
Under\linebreak
mild closure and additivity assumptions, any intrinsic energy is a quasimetric.
In goal-reaching control, optimal cost-to-go functions admit exactly this intrinsic form ; 
inversely, JEPAs trained to model intrinsic energies lie in the quasimetric value class targeted by QRL. Moreover, we observe why symmetric finite energies are structurally mismatched with one-way reachability, motivating asymmetric (quasimetric) energies when directionality matters.
\end{abstract}

        \vspace{-0.75em}

        % INTRODUCTION
        \section{Introduction}
\textbf{Joint-Embedding Predictive Architectures} (\textbf{JEPAs}) \citep{jepa_lecun_2022,assran2023self} recently emerged as a compelling self-supervised paradigm for representation learning in which models\linebreak
predict \emph{target embeddings} from \emph{context embeddings} rather than reconstructing raw observations. 
From a control viewpoint, the resulting training score can be interpreted as a scalar \emph{compatibility\linebreak
energy} between inputs : low energy meaning \textit{compatible},
while high energy means \textit{incompatible}.

Independently, \textbf{Quasimetric Reinforcement Learning} (\textbf{QRL}) \citep{qrl}
represents goal-conditioned control through the geometry of a learned cost-to-go. 
A central fact is that reaching costs induce a \emph{directed} notion of distance :
going from $s$ to $g$ can be easy, while returning from $g$ to $s$ may be difficult or impossible, and multi-step composition should be consistent. 
Hence, QRL formalizes this by targeting function classes with \emph{quasimetric} structure \citep{iqe}, and proposes a learning framework and objectives designed for that regime.

This short article addresses a narrow structural question :
\begin{quote}
\emph{Under what conditions does a JEPA-induced energy behave like a cost-to-go ?}
\end{quote}
A key insight is that sequential compositionality is not a modelling preference, but usually required when reasoning \citep{mentalmodel,llmmultistep}.
If a state $u$ (e.g., a position, a proposition, a concept) is accessible 
(e.g., reachable, demonstrable, explainable) on the way from $x$ to $y$, then an energy meant to support multi-step reasoning should satisfy a consistency inequality of the form :
$$
E(x,g) \;\le\; E(x,u) + E(u,g),
$$
which is precisely a triangle inequality. 

In physics and optimal control, the canonical way to obtain such a global, compositional energy is to define it intrinsically via a \emph{least-action} principle: the energy between two states is the infimum of accumulated local effort over admissible trajectories connecting them \citep{robotlap,phylap,metalap}. 
This construction yields triangle inequality by definition (via concatenation), and it naturally produces asymmetry whenever admissibility or local effort is directed.\linebreak 

Our main message is simple :
\begin{quote}
\begin{center}
\emph{If a JEPA is defined according to an intrinsic
(least-action) energy,\\
then it induces a quasimetric space.}
\end{center}
\end{quote}
In goal-conditioned tasks, costs-to-go can be interpreted as intrinsic energies, hence such \emph{Intrinsic-Energy JEPAs} fall into the same quasimetric value-functions class that QRL is designed to model.

        % RELATED WORK
        \section{Minimal Background}

\paragraph{JEPA (Latent Representations and Induced Energies).}
Joint-Embedding Predictive Architectures learn representations by predicting \emph{target embeddings} from \emph{context embeddings} \citep{jepa_lecun_2022}. 
Given a context encoder $f_\phi$ and a target encoder $f_{\bar\phi}$, JEPA forms
$
z_x=f_\phi(x), z_y=f_{\bar\phi}(y),
$
and a predictor $p_\theta$ produces a predicted target embedding $\hat z_y=p_\theta(z_x;\,c)$, where $c$ denotes a conditioning. 

Training uses a comparator function $D(\cdot,\cdot)$ in the embedding space :
$$
\mathcal{L}_{\text{JEPA}}(\phi,\bar\phi,\theta)
~=~\mathbb{E}_{(x,y)\sim\mathcal{D}}\!\left[\,D\!\left(\hat z_y,\texttt{sg}(z_y)\right)\right],
$$
Even when the comparator form is fixed, the model learns an \emph{induced energy landscape} over inputs through the learned representation and predictor: pairs $(x,y)$ that are \textit{compatible} are precisely those for which the latent prediction error is small. This motivates interpreting JEPA training as learning a scalar energy (compatibility) between $x$ and $y$, in line with the energy-based perspective on JEPA.

\paragraph{QRL (Goal-Reaching values are Quasimetrics).}
Quasimetric Reinforcement Learning studies goal-conditioned control through directed distances that compose over time \citep{qrl}. 
In reaching-cost problems, the optimal value function is $V^\star(s,g)$, usually negative (up to sign conventions), and satisfies a triangle inequality in $(s,g)$, yielding a quasimetric structure :
$$
d^\star(s,g) \;\triangleq\; -V^\star(s,g),\qquad 
d^\star(s,g)\le d^\star(s,w)+d^\star(w,g),
$$
with $d^\star(s,g)=+\infty$ for unreachable pairs. QRL leverages this structure by learning $d_\theta$ within quasimetric function classes \citep{iqe}, enforcing local constraints from observed transitions and using the triangle inequality to propagate these constraints to long horizons; the resulting objectives come with recovery guarantees under suitable data coverage conditions. For our purposes, the key takeaway is that QRL treats the goal-conditioned value as a \emph{directed geometry}.

\paragraph{Value-Guided Action Planning with JEPA World Models.}
Value-guided JEPA planning uses JEPA-like world models for control by shaping representation spaces so that an embedding-space cost aligns with a goal-reaching value, enabling planning via minimizing a representation-space objective \citep{destrade2025value}. 
Their contribution is primarily algorithmic and empirical: learning representations that make planning effective.
Our focus is orthogonal and structural: we do not propose planners or additional experiments.
Instead, we isolate a hypothesis-class condition on the \emph{energy} itself under which a JEPA-induced energy necessarily satisfies quasimetric inequalities, directly connecting JEPA energies to the quasimetric value viewpoint formalized in QRL.

\paragraph{Intrinsic Energies and the Least-Action Principle.}
In physics and control,
it is standard to define a global cost between configurations as the infimum of an \emph{action functional},
typically an integral of a local effort along admissible trajectories, following a least-action viewpoint \citep{siburg2004principle}.\linebreak
This perspective is used in modern ML to endow learned representations with physically meaningful structure
\citep{greydanus2019hamiltonian,cranmer2020lagrangian}, 
and more recently to motivate learning from variational constructions
\citep{guo2025physics}.
In optimal control, path-integral methods also build directly on trajectory functionals 
of the form $\int_0^T \ell(x(t),u(t))\,dt$, optimizing them to produce actions, which makes the ``least accumulated effort over trajectories'' interpretation operational in control pipelines 
\citep{asmar2023model,zhai2025pa}.
In this article, we adopt the same high-level principle but in a representation learning setting: we interpret the JEPA score as an \emph{energy} and restrict attention to energies that admit an intrinsic (least-action) representation.

    % MAIN MATTER

        % MAIN A
        \section{Intrinsic Energy Functions and Quasimetrics}

\begin{definition}[Quasimetric]
A function $d:\X\times\X\to\mathbb{R}$ is a \emph{quasimetric} if, 
for all $x,y,z\in\X$ :\linebreak
(i : reflexivity) $d(x,x)=0$ , 
(ii : non-negativity) $d(x,y)\ge 0$ , 
(iii : Identity of indiscernibles) if $d(x,y)=0$ then $x=y$,
(iv : triangular inequality) $d(x,z)\le d(x,y)+d(y,z)$.
\end{definition}

\begin{definition}[Intrinsic (\textit{Least-Action}) Energy]
Let $\X$ be a path-connected state space.
For $x,y\in\X$, let $\Gam(x\!\to\!y)$ denote a set of admissible $C^1$ trajectories $\gamma:[0,T]\to\X$, with $\gamma(0)=x$ and $\gamma(T)=y$, $\forall T>0$.
Let $L:\mathrm{T}\X\to\bar{\mathbb{R}}^+$ be a local effort density, veryfing $L(x,v)\ge c\cdot\lVert v\rVert$ for some norm $\lVert\cdot\rVert$ and a constant $c>0$. 
Defining the action of a trajectory as :
$$
\Act(\gamma)\;=\;\int_0^T L(\gamma(t),\dot\gamma(t))\,dt.
$$
the intrinsic energy is
$
E(x,y)\;=\;\infimum_{\gamma\in\Gam(x\to y)} \Act(\gamma),
$
with $E(x,y)=+\infty$ if $\ \Gam(x\to y)=\emptyset$.
\end{definition}

\begin{remark}[Physics and Control Grounding]
The least-action form is standard: it defines global energies from local effort and yields Euler--Lagrange dynamics under suitable regularity.
In optimal control, cost-to-go functions arise as infima of accumulated running costs over feasible trajectories.
\end{remark}

\begin{theorem}[Intrinsic Energy is a Quasimetric]\label{thm:intrinsic_quasimetric}
$E$ is a quasimetric on $\X$.
\end{theorem}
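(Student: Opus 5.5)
We verify the four axioms in turn, using only the definition of $E$ as an infimum of $\Act$ over $\Gam(x\to y)$ together with the coercivity bound $L(x,v)\ge c\lVert v\rVert$. Implicitly we rely on the ``mild closure and additivity assumptions'' from the abstract. \emph{Closure} means: constant paths are admissible, admissible paths can be concatenated and time-reparametrized (in particular shifted), and the result is again admissible. \emph{Additivity} means the action of a concatenation is the sum of the actions, and $L(x,0)=0$. We state these explicitly at the start, since the definition alone does not guarantee them. We also allow the value $+\infty$ on unreachable pairs, so all inequalities are read in $\bar{\mathbb{R}}^+$.

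\textbf{Non-negativity and reflexivity.} Non-negativity is immediate: $L\ge c\lVert v\rVert\ge 0$, so $\Act(\gamma)\ge 0$ for every admissible $\gamma$, and hence $E\ge 0$. For reflexivity, take the constant path $\gamma(t)\equiv x$ on $[0,T]$. Its velocity is zero, so $\Act(\gamma)=\int_0^T L(x,0)\,dt=0$ by the convention $L(x,0)=0$, giving $E(x,x)\le 0$. Combined with non-negativity, $E(x,x)=0$.

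\textbf{Triangle inequality.} If $E(x,y)$ or $E(y,z)$ is infinite, there is nothing to prove. Otherwise fix $\varepsilon>0$ and choose $\gamma_1\in\Gam(x\to y)$ and $\gamma_2\in\Gam(y\to z)$ with $\Act(\gamma_i)$ within $\varepsilon$ of the respective infima. Form the concatenation $\gamma_1\ast\gamma_2$ by shifting $\gamma_2$ in time. By closure it lies in $\Gam(x\to z)$, and by additivity, using the time-translation invariance of the integral,
\[
E(x,z)\le \Act(\gamma_1\ast\gamma_2)=\Act(\gamma_1)+\Act(\gamma_2)\le E(x,y)+E(y,z)+2\varepsilon .
\]
Letting $\varepsilon\to 0$ gives the triangle inequality.

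One technical point is that the concatenation may fail to be $C^1$ at the junction, since the velocities need not match. We would handle this either by including piecewise-$C^1$ paths in the closure assumption, or by smoothing near the junction: reparametrize each piece so that its velocity vanishes at the endpoints, for example with $t\mapsto\gamma(\phi(t))$ where $\phi$ is a smooth monotone map with $\phi'=0$ at the ends. This requires $\Act$ to be invariant under monotone reparametrization, which holds if $L$ is positively $1$-homogeneous in $v$. Otherwise the action changes only by an arbitrarily small amount, absorbed into $\varepsilon$.

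\textbf{Identity of indiscernibles (main obstacle).} This step needs coercivity. For any $\gamma\in\Gam(x\to y)$,
\[
\Act(\gamma)\ge c\int_0^T\lVert\dot\gamma(t)\rVert\,dt\ge c\,\Bigl\lVert\int_0^T\dot\gamma(t)\,dt\Bigr\rVert=c\lVert y-x\rVert .
\]
Taking the infimum gives $E(x,y)\ge c\lVert y-x\rVert$, so $E(x,y)=0$ forces $x=y$.

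The obstacle is making sense of $\int\dot\gamma=y-x$ and of the norm on a general state space. We would assume $\X$ is a subset of $\mathbb{R}^n$, or work in a chart where $\lVert\cdot\rVert$ is a fixed norm, so that the fundamental theorem of calculus applies to $C^1$ curves. On a manifold, we would instead take $\lVert\cdot\rVert$ to be a continuous Finsler or Riemannian norm. Then the bound becomes $E(x,y)\ge c\,d_{\lVert\cdot\rVert}(x,y)$, and we conclude using the fact that this path-length distance separates points. If the separation property cannot be derived, it has to be stated as part of the assumptions.
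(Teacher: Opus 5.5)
Your proof matches the paper's on non-negativity, identity of indiscernibles and the triangle inequality. You use the same coercivity chain $\Act(\gamma)\ge c\int_0^T\lVert\dot\gamma\rVert\,dt\ge c\lVert y-x\rVert$ and the same $\varepsilon$-near-optimal concatenation. The paper simply asserts $\gamma_{xy}\star\gamma_{yz}\in\Gam(x\to z)$, so it relies on the closure you make explicit and never addresses the $C^1$ junction. Of your two fixes there, keep the piecewise-$C^1$ closure option. The other, the claim that without homogeneity the smoothing changes the action ``only by an arbitrarily small amount'', would need local boundedness of $L$, which is not assumed since $L$ is $\bar{\mathbb{R}}^+$-valued.

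The real divergence is reflexivity. You impose $L(x,0)=0$, which is not in Definition~2 and does not follow from additivity of the integral. The paper does not need it. The constant path on $[0,T]$ has $\Act(\gamma)=L(x,0)\,T$. Since every horizon $T>0$ is allowed (the ``$\forall T>0$'' in Definition~2), taking the infimum over $T$ gives $E(x,x)=0$ as soon as $L(x,0)<\infty$.

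Your argument does not use that free horizon, and this matters. An effort with positive cost at rest, e.g.\ $L(x,v)=c\lVert v\rVert+\lambda$ with a time penalty $\lambda>0$, satisfies Definition~2 and still yields a quasimetric, but your reflexivity step does not apply to it. So as written you prove the theorem only for the narrower class with $L(x,0)=0$. Replacing your reflexivity step with the $T\to 0$ argument, which needs only $L(x,0)<\infty$, removes the extra hypothesis.
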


\begin{proof}
\underline{Non-negativity :} This property holds since $L\ge 0$.

\underline{Reflexivity :} Considering the constant trajectory $\gamma(t)\equiv x$ it comes that $\dot\gamma(t)=0$,
$L(x,0)$ is independent of $t$, and for any $T$, $\Act(\gamma)=L(x,0)\cdot T$.
Thus $E(x,x)=0$ considering the infimum.

\underline{Identity of indiscernibles :} Let's consider $x,y\in\X$ such that $E(x,y)=0$.
Since for any trajectory $\gamma$ :\\
$$
\Act(\gamma)\;=\;\int_0^T L(\gamma(t),\dot\gamma(t))\,dt
\ge \int_0^T c\cdot\lVert\dot\gamma(t)\lVert\,dt
\ge c\cdot\Big\lVert\int_0^T\dot\gamma(t)\,dt\Big\lVert
\ge c\cdot\Big\lVert\gamma(T)-\gamma(0)\Big\lVert\ .
$$
With $\lVert\gamma(T)-\gamma(0)\lVert= \Vert x-y\lVert$, it comes that $E(x,y)=0\ge c\cdot\lVert x-y\lVert\ge0$, consequently $x=y$.

\underline{Triangle inequality :} Let's consider $x,y,z\in\X$. 
We choose $\gamma_{xy}\in\Gam(x\to y)$,
$\gamma_{yz}\in\Gam(y\to z)$,
and $\varepsilon>0$ 
such that
$\Act(\gamma_{xy})\le E(x,y)+\varepsilon$ and $\Act(\gamma_{yz})\le E(y,z)+\varepsilon$.
Then considering the 
concatenation
$\gamma_{xz}=\gamma_{xy}\star \gamma_{yz}\in\Gam(x\to z)$ and use additivity of the integral, we have :
$$
E(x,z)\le \Act(\gamma_{xz})=\Act(\gamma_{xy})+\Act(\gamma_{yz})\le E(x,y)+E(y,z)+2\varepsilon\ .
$$
Since it is true for any $\varepsilon>0$, with $\varepsilon\to0$ the triangular inequality holds.
\end{proof}

\begin{proposition}[Asymmetry is Generic]\label{prop:asym_generic}
If either (i) \underline{admissibility is directed}, 
i.e., for $x,y\in\X$, 
$f:t\to\gamma(t)\in\Gam(x\to y)$ 
$\centernot\implies$ 
$\bar{f}:t\to\gamma(T-t)\in\Gam(y\to x)$, 
\underline{or}
(ii) \underline{local effort is anisotropic} (e.g., $L(x,v)\neq L(x,-v)$), 
\underline{then in general $E(x,y)\neq E(y,x)$}.
\end{proposition}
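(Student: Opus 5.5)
The statement says asymmetry arises ``in general'', which is an existence claim, not a universal one. So I will read it as: symmetry of $E$ is not forced by the definition of intrinsic energy, and each mechanism (i) and (ii) on its own already yields examples with $E(x,y)\neq E(y,x)$. My plan is to first explain why symmetry holds when both mechanisms are absent, and then build one explicit counterexample for each mechanism. Theorem~\ref{thm:intrinsic_quasimetric} guarantees that $E$ is a quasimetric in every case, so only the value of $E$ needs to be computed.

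\emph{Symmetric baseline.} Suppose $\Gam$ is closed under time reversal $\bar\gamma(t)=\gamma(T-t)$ and $L(x,v)=L(x,-v)$. Then $\dot{\bar\gamma}(t)=-\dot\gamma(T-t)$. The substitution $s=T-t$ gives $\Act(\bar\gamma)=\Act(\gamma)$, and reversal is a bijection between $\Gam(x\to y)$ and $\Gam(y\to x)$, so $E(x,y)=E(y,x)$. Hence any asymmetry must come from breaking one of these two hypotheses, which are exactly (i) and (ii).

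\emph{Case (ii), anisotropic effort.} I take $\X=\mathbb{R}$ with every $C^1$ path admissible, so admissibility is symmetric. The effort is $L(x,v)=a\,v^+ + b\,v^-$ with $a\neq b$ and $a,b>0$, which satisfies $L\ge \min(a,b)|v|$. For $x<y$, any path satisfies $\int \dot\gamma^+ \ge y-x$, so $\Act(\gamma)\ge a(y-x)$. A monotone path attains this bound, giving $E(x,y)=a(y-x)$. The same argument gives $E(y,x)=b(y-x)$, so the two values differ.

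\emph{Case (i), directed admissibility.} I keep $L(x,v)=|v|$, which is isotropic. On $\X=\mathbb{R}$ I declare admissible only the paths with $\dot\gamma\ge 0$, for example modelling a one-way dynamics. Then for $x<y$ we get $E(x,y)=y-x$ while $\Gam(y\to x)=\emptyset$, so $E(y,x)=+\infty$. If a finite example is preferred, I instead allow backward motion only at bounded speed together with a penalized effort, or I use a graph-like space consisting of a one-way short road and a long two-way detour, which gives finite but unequal values.

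\emph{Main obstacle.} The delicate point is the phrase ``in general''. The statement cannot be a theorem that asymmetry always occurs, because directed admissibility can coexist with symmetric infima: for instance, if the cheapest path happens to be reversible, or if $L(x,v)\neq L(x,-v)$ only off the optimal paths. So I will state the result explicitly as ``the hypotheses do not imply symmetry, and asymmetric instances exist''. The examples above prove exactly this, and the symmetric baseline shows that these are the only two possible sources of asymmetry.
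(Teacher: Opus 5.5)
The paper gives no proof of this proposition. It is stated as an informal observation and followed only by Remark~\ref{remark:minimal}. So your argument is less an alternative route than a rigorous version of what the paper leaves implicit, and it is correct.

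Your symmetric baseline is a converse the paper never states: reversal-closed admissibility together with $L(x,v)=L(x,-v)$ forces $E(x,y)=E(y,x)$ via $s=T-t$. This shows that (i) or (ii) is \emph{necessary} for asymmetry. Your two examples then show that each mechanism alone suffices for some asymmetric instance. The family $L(x,v)=a\,v^+ + b\,v^-$ even lends modest support to the word ``generic'', since within it $E$ is symmetric only when $a=b$.

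The nearest rigorous content in the paper is Proposition~\ref{prop:symmetry_obstruction}. It covers the extreme form of your case (i) abstractly: one-way reachability, with $E(y,x)=+\infty$, excludes symmetry. It says nothing about finite asymmetry from anisotropic effort, which your explicit computation supplies.

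Your ``main obstacle'' is also right, and you can make it concrete. With $L(x,v)=|v|$ and admissible paths $\dot\gamma\ge -1$, condition (i) holds yet $E(x,y)=|x-y|$. Likewise $L(x,v)=|v|+(v-1)^+$ is anisotropic yet gives the same symmetric $E$. In both cases this happens because $T$ is free and one can always move slowly.

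Two minor points:
\begin{enumerate}
\item The appeal to Theorem~\ref{thm:intrinsic_quasimetric} is unnecessary. Nothing you do uses the quasimetric property, and in case (i) $E$ takes the value $+\infty$, outside the codomain in the paper's definition of a quasimetric.
\item Your finite variant of case (i) isolates directed admissibility only if the penalty stays isotropic. An example is a time cost $L(x,v)=|v|+\lambda$ with $\lambda>0$ and backward speed bounded by $\kappa$. For $x<y$ this gives $E(x,y)=y-x$ but $E(y,x)=(1+\lambda/\kappa)(y-x)$.
\end{enumerate}
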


\begin{remark}[A Minimal Picture]\label{remark:minimal}
\textit{\color{RubineRed}\hyperref[thm:intrinsic_quasimetric]{Theorem}
\ref{thm:intrinsic_quasimetric}}
formalizes the statement ``valid world energies must compose across time'':
the energy of $x\to z$ cannot exceed the energy of $x\to y$ plus the energy of $y\to z$.
This is the triangle inequality that QRL builds into its value geometry \citep{qrl}.
\end{remark}

        % MAIN B
        \section{Intrinsic-Energy JEPA and the QRL hypothesis class}
\vspace{-0.2em}

\begin{definition}[Intrinsic-Energy JEPA (IE-JEPA)]
Let $f_\phi:\X\to\mathbb{R}^d$ be an encoder, and let $\mathcal{C}$ be a prediction rule producing a scalar score from embeddings (e.g., $\|p_\theta(f_\phi(x))-f_{\bar\phi}(y)\|^2$ as in I-JEPA \citep{assran2023self}).
We say a JEPA induces an energy $E_{\phi,\theta}:\X\times\X\to\bar{\mathbb{R}}^+$ if its evaluation score can be interpreted as $E_{\phi,\theta}(x,y)$.
We call it an \emph{Intrinsic-Energy JEPA} if $E_{\phi,\theta}$,
in the sense of Definition~2, i.e., a least-action energy consistent with concatenation and local effort accumulation.
\end{definition}

\begin{corollary}[IE-JEPA Energies are Quasimetrics]\label{cor:iejepa_qm}
If a JEPA-induced energy $E_{\phi,\theta}$ is intrinsic, then it is a quasimetric.  
\textit{\color{RubineRed}\hyperref[fig:jepa-quasimetric]{Figure}
\ref{fig:jepa-quasimetric}}
illustrates this parallel.
\end{corollary}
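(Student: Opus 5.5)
The corollary follows from Theorem~\ref{thm:intrinsic_quasimetric}, so the plan is to make the reduction explicit rather than re-derive anything. By the definition of an Intrinsic-Energy JEPA, saying that $E_{\phi,\theta}$ is intrinsic means there exist a family of admissible trajectory sets $\Gam(x\!\to\!y)$ and a local effort density $L$ satisfying the hypotheses of Definition~2, such that
$$
E_{\phi,\theta}(x,y)=\infimum_{\gamma\in\Gam(x\to y)}\Act(\gamma)
$$
for all $x,y\in\X$. Here $L\ge 0$ and $L(x,v)\ge c\lVert v\rVert$ for some $c>0$, and $E_{\phi,\theta}(x,y)=+\infty$ when $\Gam(x\to y)=\emptyset$. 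First I will fix such a representation $(\Gam,L)$. Then I will observe that $E_{\phi,\theta}$ coincides pointwise with the energy $E$ of Definition~2 built from that data. The encoder, predictor and comparator play no further role: the corollary is a statement about the function $E_{\phi,\theta}$, not about how it is parametrized.

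Next I will invoke Theorem~\ref{thm:intrinsic_quasimetric} verbatim, which yields the four properties in turn.
\begin{itemize}
\item Non-negativity follows from $L\ge0$.
\item Reflexivity follows from constant trajectories with $T\to0$.
\item Identity of indiscernibles follows from the lower bound $\Act(\gamma)\ge c\lVert x-y\rVert$.
\item The triangle inequality follows from $\varepsilon$-optimal concatenation.
\end{itemize}
One small bookkeeping point is that $E_{\phi,\theta}$ takes values in $\bar{\mathbb{R}}^+$, while Definition~1 is stated for real values. I will read the quasimetric axioms in the extended sense, with the conventions $a+\infty=\infty$ and $\infty\le\infty$, exactly as in QRL, where $d^\star=+\infty$ for unreachable pairs. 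This reading is harmless, since every inequality in the proof of the theorem remains valid under these conventions.

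The main obstacle is ensuring that the closure assumptions the theorem silently uses are genuinely part of "intrinsic, consistent with concatenation and local effort accumulation". Two assumptions are involved. First, admissible sets must be closed under concatenation: if $\gamma_{xy}\in\Gam(x\to y)$ and $\gamma_{yz}\in\Gam(y\to z)$, then $\gamma_{xy}\star\gamma_{yz}\in\Gam(x\to z)$, and its action must be additive. Second, constant trajectories must be admissible, with $L(x,0)$ finite so that $L(x,0)\cdot T\to0$ as $T\to0$.

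I would state both assumptions explicitly as the meaning of "consistent with concatenation" in the IE-JEPA definition. One wrinkle is that a concatenation of $C^1$ curves is only piecewise $C^1$. I would handle this either by allowing piecewise-$C^1$ admissible curves, where the action integral is still additive, or by a reparametrization that makes the junction smooth while changing the action by at most $\varepsilon$. Once these points are granted, the corollary is immediate.
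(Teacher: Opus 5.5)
Your proposal is correct and follows the paper's route. The paper gives no separate proof of the corollary, because an intrinsic $E_{\phi,\theta}$ is by definition an energy of the form in Definition~2, so Theorem~\ref{thm:intrinsic_quasimetric} applies verbatim. Your explicit assumptions are exactly the ``mild closure and additivity assumptions'' the paper relies on silently: closure under concatenation with additive action, admissible constant paths with $L(x,0)<+\infty$, and extended-real conventions.

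Of your two fixes for the non-$C^1$ junction, prefer admitting piecewise-$C^1$ curves. The reparametrization route would also need some regularity of $L$ (e.g.\ local boundedness) and closure of $\Gam$ under time changes, and Definition~2 provides neither.
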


\begin{remark}[Does JEPA ``Learn the energy'' ?]
In common JEPA instantiations (e.g., I-JEPA), the comparator is often fixed (e.g., $\ell_2$ in embedding space), while the encoders and predictors are learned. In that sense, the \emph{analytic form} of the comparator can be given, yet the \emph{induced energy landscape} over inputs is learned through the learned representation and predictor.
Conversely, QRL explicitly parameterizes a quasimetric cost and learns it with objectives tailored to a quasimetric structure.
\end{remark}

\begin{figure}[h]
\centering
\begin{tikzpicture}[
    scale=0.8,
    transform shape,
    node distance=2.5cm,
    state/.style={circle, draw=black, thick, fill=blue!15, minimum size=1.2cm, font=\large\bfseries},
    goal/.style={circle, draw=black, thick, fill=green!20, minimum size=1.2cm, font=\large\bfseries},
    intermediate/.style={circle, draw=black, thick, fill=orange!20, minimum size=1.2cm, font=\large\bfseries},
    arrow/.style={-{Stealth[length=3mm]}, thick, draw=blue!60!black},
    dasharrow/.style={-{Stealth[length=3mm]}, thick, draw=red!70!black, dashed},
    label/.style={font=\small, midway, fill=white, inner sep=2pt},
    title/.style={font=\normalsize\bfseries, text=black},
]
% === LEFT SIDE: JEPA Framework ===
\node[title] at (-4, 4.5) {JEPA Energy};
\node[state] (x1) at (-6, 2) {$x$};
\node[intermediate] (u1) at (-4, 3.5) {$u$};
\node[goal] (g1) at (-2, 2) {$g$};
\draw[arrow] (x1) -- (g1) node[label, below, yshift=-2pt] {$E(x,g)$};
\draw[arrow] (x1) -- (u1) node[label, above left, xshift=-2pt] {$E(x,u)$};
\draw[arrow] (u1) -- (g1) node[label, above right, xshift=2pt] {$E(u,g)$};
\draw[dasharrow, bend left=20] (x1) to node[label, above, yshift=3pt] {\footnotesize trajectory $\gamma$} (g1);
% === RIGHT SIDE: Quasimetric (QRL) ===
\node[title] at (4, 4.5) {Quasimetric RL};
\node[state] (s) at (2, 2) {$s$};
\node[intermediate] (w) at (4, 3.5) {$w$};
\node[goal] (g2) at (6, 2) {$g$};
\draw[arrow] (s) -- (g2) node[label, below, yshift=-2pt] {$d^*(s,g)$};
\draw[arrow] (s) -- (w) node[label, above left, xshift=-2pt] {$d^*(s,w)$};
\draw[arrow] (w) -- (g2) node[label, above right, xshift=2pt] {$d^*(w,g)$};
\end{tikzpicture}
\caption{Intrinsic-energy JEPAs induce quasimetric structure.}
\label{fig:jepa-quasimetric}
\end{figure}
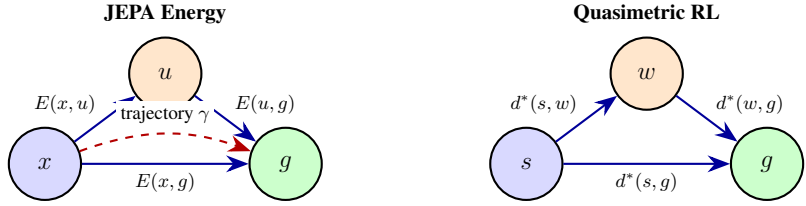

\vspace{-0.5em}

\paragraph{Goal-reaching control as intrinsic energy.}
Consider a reaching-cost problem on $\X$: for each feasible trajectory $\gamma$ from $x$ to $g$, define accumulated cost $\Act(\gamma)=\int_0^T c(\gamma(t),\dot\gamma(t))dt$ with $c\ge 0$ (continuous-time) or $\sum_t c(s_t,s_{t+1})$ (discrete-time). The optimal cost-to-go is
\[
V^\star(x,g)=\infimum_{\gamma\in\Gam(x\to g)} \Act(\gamma),
\]
which is exactly an intrinsic energy (Definition~2). This is the standard variational/optimal control object underlying dynamic programming \citep{bertsekasDP}.

\begin{corollary}[IE-JEPA $\subseteq$ QRL hypothesis class (goal-reaching setting)]
In goal-reaching problems where the optimal cost-to-go is an intrinsic energy, any IE-JEPA energy that approximates this intrinsic energy is a quasimetric cost-to-go. This places IE-JEPA energies within the same class of quasimetric value functions formalized and targeted by QRL \cite{qrl}.
\end{corollary}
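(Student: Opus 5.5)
The plan is to obtain the corollary by chaining three facts already in the paper. First, the goal-reaching paragraph shows that $V^\star(x,g)=\inf_{\gamma\in\Gam(x\to g)}\Act(\gamma)$ is an intrinsic energy in the sense of Definition~2, once the running cost $c$ plays the role of $L$. Second, Theorem~\ref{thm:intrinsic_quasimetric} then makes $V^\star$ a quasimetric. Third, an IE-JEPA energy is by definition itself intrinsic, so Corollary~\ref{cor:iejepa_qm} applies and $E_{\phi,\theta}$ is a quasimetric. Being a quasimetric whose target is the cost-to-go $V^\star=d^\star$ is exactly the membership condition for the QRL hypothesis class described in the background section.

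In order, I will carry out these steps.
\begin{enumerate}
\item Make explicit what ``approximates'' means. I will read it as: $E_{\phi,\theta}$ is itself intrinsic, with an effort density $L_{\phi,\theta}$ and admissible set chosen to approximate $(c,\Gam)$. I will not read it as a pointwise numerical closeness to $V^\star$, because pointwise closeness alone does not preserve the triangle inequality or the identity of indiscernibles.
\item Check that the hypotheses of Definition~2 hold for $V^\star$. This means $c\ge 0$, the coercivity bound $c(x,v)\ge c_0\lVert v\rVert$, and closure of $\Gam$ under concatenation (and, for reflexivity, under constant paths). These are what the phrase ``where the optimal cost-to-go is an intrinsic energy'' is assumed to grant.
\item Invoke Theorem~\ref{thm:intrinsic_quasimetric} for both $V^\star$ and $E_{\phi,\theta}$. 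This yields non-negativity, $E(x,x)=0$, the triangle inequality, and the convention $+\infty$ for unreachable pairs, so both functions lie in the class of (extended-valued) quasimetrics.
\end{enumerate}

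For the discrete-time variant $\sum_t c(s_t,s_{t+1})$, I will note that the proof of Theorem~\ref{thm:intrinsic_quasimetric} carries over with sums in place of integrals. Concatenating finite state sequences gives additivity, so the triangle inequality follows as before. The empty sequence gives $E(x,x)=0$. The identity of indiscernibles needs $c(s,s')>0$ whenever $s\neq s'$; this is the discrete analogue of the coercivity assumption, and I will state it explicitly.

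The main obstacle is the word ``approximates''. If it is taken as a numerical approximation (for example, $|E_{\phi,\theta}-V^\star|\le\epsilon$), then only an approximate triangle inequality $E(x,z)\le E(x,y)+E(y,z)+3\epsilon$ survives, which is not a quasimetric. I therefore plan to resolve this by definitional restriction: the quasimetric conclusion is inherited from the IE-JEPA assumption, not from the approximation. The approximation only ensures that the quasimetric being represented is the cost-to-go. As a remark, I would note that in the numerical reading one still gets an $\epsilon$-quasimetric, which converges to $V^\star$ as $\epsilon\to 0$.
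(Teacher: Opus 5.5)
Your main argument is the paper's own route. The paper supports this corollary only through the goal-reaching paragraph, which identifies $V^\star$ as an intrinsic energy, combined with Theorem~\ref{thm:intrinsic_quasimetric} and Corollary~\ref{cor:iejepa_qm}. You also make explicit two things the paper leaves implicit. First, under your structural reading of ``approximates'', the quasimetric property comes from the IE-JEPA definition, not from numerical closeness to $V^\star$; closeness alone gives only the $3\epsilon$-relaxed triangle inequality you compute. Second, $c\ge 0$ must be supplemented by the uniform coercivity bound and by closure of $\Gam$ under concatenation and constant paths. Both points are correct.

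The one step that would fail is in your discrete-time remark. Pointwise positivity $c(s,s')>0$ for $s\neq s'$ is not the discrete analogue of coercivity. On an infinite state space it does not give the identity of indiscernibles, because the infimum over paths can still be $0$. Take states $\{x,y\}\cup\{1,2,3,\dots\}$ whose only transitions are $x\to n$ and $n\to y$, each of cost $1/n$. Every path from $x$ to $y$ costs $2/n>0$, yet $E(x,y)=0$ with $x\neq y$.

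You need a uniform bound instead. One option is $c(s,s')\ge c_0>0$ for $s\neq s'$. Another is $c(s,s')\ge c_0\lVert s'-s\rVert$, which telescopes via the norm's triangle inequality to $\sum_t c(s_t,s_{t+1})\ge c_0\lVert y-x\rVert$, exactly as in the continuous proof. Alternatively, your condition does suffice on a finite state space. There, removing cycles cannot increase cost when $c\ge 0$, so the infimum is a minimum over finitely many simple paths, each of positive cost.
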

\subsection*{Why symmetry fails for directed reachability ?}
% \subsection*{A minimal impossibility statement (why symmetry fails for directed reachability ?)}
To avoid implying that asymmetry is an aesthetic choice, we include the following basic obstruction.

\begin{proposition}[Symmetric Finite Energies cannot Represent Directed Reachability]\label{prop:symmetry_obstruction}
Let $R\subseteq\X\times\X$ be a directed reachability relation, where $(x,y)\in R$ means ``$y$ is reachable from $x$''.
Suppose\linebreak
$E:\X\times\X\to\bar{\mathbb{R}}^+$ satisfies:
(i) $E(x,y)<+\infty$ iff $(x,y)\in R$ (finite energy on reachable pairs),
and (ii) $E$ is symmetric: $E(x,y)=E(y,x)$.
Then $R$ must be symmetric: $(x,y)\in R \Rightarrow (y,x)\in R$.
Hence, no symmetric finite energy can encode one-way reachability.
\end{proposition}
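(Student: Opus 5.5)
The argument is a direct chase through the two hypotheses, with no appeal to Theorem~\ref{thm:intrinsic_quasimetric} or to the intrinsic structure of $E$. Fix an arbitrary pair $(x,y)\in R$. By the ``if'' direction of hypothesis (i), $E(x,y)<+\infty$. By hypothesis (ii), $E(y,x)=E(x,y)$, so $E(y,x)<+\infty$. Applying the ``only if'' direction of (i) to the pair $(y,x)$ then gives $(y,x)\in R$. Since $(x,y)$ was arbitrary, $R$ is symmetric.

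For the final sentence of the statement, I would argue by contraposition. Say $R$ encodes one-way reachability if there is a pair with $(x,y)\in R$ but $(y,x)\notin R$. Such an $R$ is not symmetric, so by the first part no $E$ can satisfy both (i) and (ii) for it. Equivalently, any energy that is finite exactly on the reachable pairs of such an $R$ must be asymmetric, with $E(x,y)<+\infty=E(y,x)$ on that witnessing pair.

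I do not expect a real obstacle. The one point needing care is that (i) is an ``iff'', and both directions are used: ``if'' to pass from $R$ to finiteness, and ``only if'' to pass from finiteness back to $R$. I would also note that the argument uses only the value set $\bar{\mathbb{R}}^+$ and never the triangle inequality. It therefore applies to any candidate JEPA energy, not just intrinsic ones. This links back to Proposition~\ref{prop:asym_generic}(i): directed admissibility of the kind described there forces asymmetry, possibly in the extreme form of one finite and one infinite value.
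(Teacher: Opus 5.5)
Your proof is correct and is exactly the paper's argument: apply the ``if'' direction of (i) to get $E(x,y)<+\infty$, use symmetry (ii), then apply the ``only if'' direction of (i) to get $(y,x)\in R$; your contrapositive reading of the final sentence is a harmless elaboration. One caveat concerns your closing side remark: directed admissibility in the sense of Proposition~2(i) does not by itself force asymmetry, because other admissible paths from $y$ to $x$ may still exist; the present result forces $E(x,y)\neq E(y,x)$ only when reachability itself is one-way, i.e.\ some pair lies in $R$ while its reverse does not.
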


\vspace{-1.0em}

\begin{proof}
If $(x,y)\in R$, by (i) $E(x,y)<+\infty$. By (ii) $E(y,x)=E(x,y)<+\infty$, and $(y,x)\in R$.
\end{proof}

\paragraph{Relation to value-guided JEPA planning.}
% Destrade et al.\ \cite{destrade2025valueguided} propose to shape JEPA representation spaces so that (quasi-)distances approximate negative goal-conditioned values, improving planning performance. Our scope is narrower: we do not introduce planners nor additional empirical studies; we provide a structural condition (intrinsic energy) under which a JEPA score is necessarily a quasimetric cost-to-go, directly connecting JEPA world-model scoring to the QRL value-geometry viewpoint.
\cite{destrade2025value} shape JEPA representation spaces so that (quasi-)distances approximate negative goal-conditioned values, improving planning performance. Our focus is narrower: we introduce neither planners nor new experiments, but instead identify a structural condition (intrinsic energy) under which a JEPA score is necessarily a quasimetric cost-to-go, directly linking JEPA world-model scoring to the QRL geometry perspective.

    % bACK MATTER

        % DISCUSSION
        \section{Discussion and scope}
\vspace{-0.2em}
\paragraph{What this note does \emph{not} claim.}
We do not claim that all JEPAs are quasimetrics, nor that the JEPA learning framework matches the QRL one unconditionally. 
Our equivalence is \emph{conditional} : 
it holds for the family intrinsic energies (least-action functionals). 
This is a meaningful restriction since intrinsic energies are a canonical compositional energies in physics and optimal control.

\vspace{-0.5em}

\paragraph{Why the least-action form is not ad hoc.}
The least-action definition is the minimal way to obtain :\linebreak
(i) temporal compositionality,
(ii) compatibility with irreversible systems,
and (iii) identification with cost-to-go.
Under this view, quasimetric regularization is the underlying consistency property.

\vspace{-0.5em}

\paragraph{Beyond RL.}
The intrinsic-energy view applies whenever there are admissible transformations and accumulated local efforts.
This includes (a) deformation models in diffeomorphic image registration where distances are defined as least-action energies,
and (b) reasoning structures where implications form directed relations with ordered-embedding approaches explicitly modelling directionality.

        \newpage

        % BIBLIOGRAPHY
        \bibliography{iclr2026_conference}
        \bibliographystyle{iclr2026_conference}

\end{document}